\newcommand{\scriptL}{\mathcal{L}}
\newcommand{\scriptX}{\mathcal{X}}
\newcommand{\scriptY}{\mathcal{Y}}
\newcommand{\scriptK}{\mathcal{K}}
\DeclarePairedDelimiter\abs{\lvert}{\rvert}%
\DeclarePairedDelimiter\norm{\lVert}{\rVert}%
\newcommand\numberthis{\addtocounter{equation}{1}\tag{\theequation}}
\newtheorem{proposition}{Proposition}
\newtheorem{definition}{Definition}
\newtheorem{theorem}{Theorem}
\newtheorem{lemma}{Lemma}
\newtheorem{remark}{Remark}
\newcommand\blfootnote[1]{%
  \begingroup
  \renewcommand\thefootnote{}\footnotetext{#1}%
  \addtocounter{footnote}{-1}%
  \endgroup
}
\newcommand*{\email}[1]{%
    \normalsize\href{mailto:#1}{#1}\par
    }
\theoremstyle{plain}
\title{A Neural Pre-Conditioning Active Learning Algorithm to Reduce Label Complexity}
\author{%
\textbf{Seo Taek Kong}$^{1,*,\dagger}$ \quad \textbf{Soomin Jeon}$^{2}$ \quad \textbf{Dongbin Na}$^3$ \quad \textbf{Jaewon Lee}$^3$ \quad \textbf{Hong-Seok Lee}$^3$ \quad \textbf{Kyu-Hwan Jung}$^{4,*,\dagger}$ 
    \\
$^1$University of Illinois, Urbana-Champaign \quad $^2$Dong-A University \quad $^3$VUNO Inc. \quad $^4$Sungkyunkwan University
}
\begin{document}

\maketitle





\begin{abstract}
    Deep learning (DL) algorithms rely on massive amounts of labeled data. Semi-supervised learning (SSL) and active learning (AL) aim to reduce this label complexity by leveraging unlabeled data or carefully acquiring labels, respectively. 
    In this work, we primarily focus on designing an AL algorithm but first argue for a change in how AL algorithms should be evaluated. 
    Although unlabeled data is readily available in pool-based AL, AL algorithms are usually evaluated by measuring the increase in supervised learning (SL) performance at consecutive acquisition steps. 
    Because this measures performance gains from both newly acquired instances and newly acquired labels, we propose to instead evaluate the label efficiency of AL algorithms by measuring the increase in SSL performance at consecutive acquisition steps. 
    After surveying tools that can be used to this end, we propose our neural pre-conditioning (NPC) algorithm inspired by a Neural Tangent Kernel (NTK) analysis.
    Our algorithm incorporates the classifier's uncertainty on unlabeled data and penalizes redundant samples within candidate batches to efficiently acquire a diverse set of informative labels. 
    Furthermore, we prove that NPC improves downstream training in the large-width regime in a manner previously observed to correlate with generalization. Comparisons with other AL algorithms show that a state-of-the-art SSL algorithm coupled with NPC can achieve high performance using very few labeled data.
\end{abstract}

\section{Introduction}
Active learning (AL) describes the setting where a model can interact with a dedicated annotator and query for labels.
\blfootnote{* This work was submitted while the authors worked at Vuno Inc.\hfill}
This is in contrast to passive learning where labels are acquired randomly.
In pool-based AL, both unlabeled $\mathcal{Z}_U = \scriptX_U$ and labeled data $\mathcal{Z}_L = (\scriptX_L, \scriptY_L)$ are available for training, and labels are incrementally acquired by querying $\abs{\scriptY^s} = Q$ labels at each query step $s$ until a labeling budget $B$ is met. 
\blfootnote{$^\dagger$ Correspondence to \email{skong10@illinois.edu, khwanjung@skku.edu}}
The question we aim to answer in this work is: ``Given images $\scriptX$ and a labeling budget $B$, what's the maximum performance that can be achieved?''
Traditionally, AL algorithms have been evaluated by measuring downstream supervised learning (SL) performance, i.e. training on $\mathcal{Z}_L$, but we argue that downstream semi-supervised learning (SSL) performance, i.e. training on $\mathcal{Z} = \mathcal{Z}_L \cup \mathcal{Z}_U$, is a better benchmark for the following reasons.
Supervised learning performance as an evaluation metric for AL fails to extricate performance gains from newly-acquired labels $\scriptY^{s+1}$ from the influence of newly-acquired samples $\scriptX^{s+1}$.
In contrast, SSL trains a model on $\mathcal{Z}$ and the only difference of datasets $\mathcal{Z}^{s+1} - \mathcal{Z}^{s}$ after subsequent query steps is the labels $\scriptY^{s+1}$, and the respective performances reveal gains from newly-acquired labels.
While it has been shown that many AL algorithms outperform passive learning (PL) with respect to instance-label efficiency, we see that some AL algorithms in fact under-perform PL with respect to this criterion that measures label efficiency.

Simply replacing SL with SSL can introduce new problems in evaluating the label-efficiency of AL algorithms.
While in principle more data should always be better, SSL performance deteriorates significantly when the labeled set's classes are imbalanced \citep{lee2021abc,darp}.
In AL, the number of images corresponding to each class cannot be observed prior to labeling, and constructing a balanced labeled set would require discarding majority classes retrospectively.
Class imbalance had not been as problematic when benchmarking AL algorithms with SL performance because large query sizes ultimately yields class distributions closer to uniform.
We address this issue by adopting the recently-proposed distribution re-alignment method \citep{darp} applied to a widely used SSL algorithm called FixMatch \citep{fixmatch}.


Having motivated downstream SSL performance as an evaluation metric for AL, we seek to maximize performance with respect to a labeling budget $B$.
We propose a neural pre-conditioning (NPC) algorithm that builds on prior work and addresses problems noticed in literature.
Our algorithm uses the Gram matrix of the model's gradients with respect to parameters.
Gradients of the loss function have been used by \citet{EGL,badge} in the context of AL to model a classifier's uncertainty about unlabeled samples, whereas we use gradients of the classifier's outputs.
Because of the difference, the embeddings used by NPC capture uncertainty by comparing the direction of gradients with other possibly more certain samples' in addition to their magnitudes.
Moreover, our algorithm operates in the batch-mode setting where the collective importance of candidate samples is measured together.
Lastly, we show that the landscape of downstream SSL is improved when supervised on labeled data selected by NPC which is why we name it neural pre-conditioning.

This paper is structured as follows.
Section \ref{sec:Problem} formally describes why comparing downstream SL performance after consecutive acquisition steps fails to measure the label-efficiency of AL algorithms, and proposes to consider downstream SSL performance as a fair evaluation metric.
Section \ref{sec:Method} lays out the observations made in prior works that motivate the proposed algorithm before proceeding to stating the algorithm and how it addresses these concerns.
Section \ref{sec:Discussion} addresses the last pre-conditioning property of the algorithm and describes potential benefits to a randomized search procedure invoked by NPC.
Lastly, Section \ref{sec:experiments} presents experiments that show how the proposed algorithm enhances downstream SSL performance, and highlights how some AL algorithms are not as effective in our proposed setting.

\section{Problem Setting and Related Work}\label{sec:Problem}
\subsection{Active Semi-Supervised Learning}
To measure performance gains from only newly acquired labels, we alternate between applying AL to acquire labels and training a classifier on newly acquired data using SSL, where at first a small set of labels $\abs{\scriptY_L^0} = Q_0$ with balanced classes is assumed.
At each query step $s$, the classifier queries for $Q$ labels $\mathcal{Y}^s$ corresponding to samples ${\mathcal X}^s$ from the remaining unlabeled pool $\mathcal{X}_U$.
A state-of-the-art SSL algorithm named FixMatch \citep{fixmatch} with a pseudo-label refinement procedure (DARP, \citet{darp}) is used to handle class imbalance when training after subsequent query/acquisition steps. 
We refer to the above procedure as active semi-supervised learning (ASSL) following \citep{Henneke_agnostic}, and remark that the term has been used to refer to different procedures \citep{CEAL}.
Our ASSL setting closely follows standard AL benchmarks \citep{badge,coreset,ShalevSchwartz} with the difference being that SSL, instead of SL, is invoked for training.

We explain why comparing downstream SSL, instead of SL, performance at consecutive acquisition steps is a better evaluation scheme when measuring the label-efficiency of AL algorithms.
Consider two fully-trained classifiers at consecutive acquisition steps $s$ and $s+1$.
The performance difference of downstream SL (ASL) performance is given by
\begin{equation}
 \mathbb{P}\left(\hat{y}\left(x_{test}; \scriptX_L \cup \scriptX^{s+1}, \scriptY_L \cup \scriptY^{s+1}\right) \neq y_{test}\right) - 
  \mathbb{P}\left(\hat{y}\left(x_{test}; \scriptX_L, \scriptY_L \right) \neq y_{test}\right) ,
\end{equation}
where $\hat{y}\left(x_{test}; \cdot\right)$ is the prediction of a classifier trained on data $\cdot$.
In contrast, the difference of downstream SSL performance 
\begin{equation}
    \mathbb{P}\left(\hat{y}\left(x_{test}; \scriptX_L \cup \scriptX_U, \scriptY_L \cup \scriptY^{s+1}\right) \neq y_{test}\right) - \mathbb{P}\left(\hat{y}\left(x_{test}; \scriptX_L \cup \scriptX_U, \scriptY_L \right) \neq y_{test}\right) 
\end{equation}
measures the gain from only newly-acquired labels $\scriptY^{s+1}$.
While ASL is affected by both newly-acquired images and labels, ASSL extricates the two and reveals performance gains from only the newly-acquired labels.
The above description motivates one reason to consider ASSL, but it is clear that AL can be applied to improve SSL performance as in \citep{MixMatchAL}.

Despite its importance, we believe two main hurdles restrained prior works to consider ASSL.
SSL algorithms have seen great advances only recently \citep{berthelot2019mixmatch,fixmatch} and their full strengths simply weren't available.
On CIFAR-10, state-of-the-art SSL algorithms presented with as few as 40 labels are now able to match full-supervision where all labeled data is used.
Second, SSL performance degrades significantly when the class distribution of labeled data is imbalanced.
In AL, a-priori enforcing balanced classes is impossible because labels are unknown and discarding majority classes (under-sampling) wastes what was spent to acquire the labels.
When the labeled set's classes are highly imbalanced, pseudo-labels generated by SSL algorithms are even more-so imbalanced \citep{darp}.
By adopting a pseudo-label refinement process, we rectify performance degradation caused by class imbalance and are able to achieve increasing performances when incrementally acquiring more labels.

\subsection{Related Work}
\subsubsection{Active Learning}
Only DL-based AL algorithms are surveyed, where version-space approaches become trivial due to their expressive power \citep{badge}.
A fully-trained classifier is used to query for labels of samples from a pool of unlabeled data $\scriptX_U$.
Many AL algorithms can be characterized by how they valuate each candidate batch $\scriptX \subset \scriptX_U$, where labels corresponding to the batch maximizing some scoring function $v\left(\scriptX\right)$ are acquired.
Among the earliest algorithms, the uncertainty-based algorithms developed in \citep{uncertainty} score each sample $x_i$ using the classifier's margin $v_i = \min_{y' \neq \hat{y}} f_{\theta} (x_i; \hat{y}) - f_{\theta} (x_i; y')$, or entropy $H\left(\sigma \left(f(x_i ; \cdot)\right)\right)$ where $\sigma$ is the softmax function.
Because DNNs are often mis-calibrated and their softmax probabilities are not a good proxy for uncertainty \citep{calibration}, a line of work \citep{batchbald} use Bayesian neural networks \citep{bayesian_dnn}.
\citet{ShalevSchwartz} computes the $\mathcal{H}$-divergence \citep{Hdivergence} resulting from hypothetical inclusions of unlabeled samples to the labeled set, and selects those that best aligns the distributions underlying labeled and unlabeled sets.
\citet{coreset} pose each query step as a core-set selection problem and finds an approximate solution.
EGL \citep{EGL} and BADGE \citep{badge} use the gradients of a loss on unlabeled samples as proxies for uncertainty.
The former queries for samples that maximize the gradient norm, while the latter diversifies gradient embeddings using k-means++.

\subsubsection{Semi-supervised Learning}
Modern SSL algorithms utilize unlabeled samples and add a consistency loss to act as a regularization in addition to the standard supervision loss.
FixMatch \citep{fixmatch} is a state-of-the-art algorithm that combines and simplifies a sequence of developed SSL methods \citep{pseudolabel,pi_model,mean_teacher,berthelot2019mixmatch} by generating pseudo-labels with weakly-augmented samples.
\citet{darp,lee2021abc} observe that pseudo-labels generated by related algorithms \citep{berthelot2019mixmatch,fixmatch,remixmatch} are severely imbalanced when the classifier is trained on imbalanced data, thereby degrading performance. 
In AL, it is impossible to ensure balanced classes in either the labeled or unlabeled sets and the same problem persists.
For our problem setting, we use FixMatch-DARP \citep{darp} where pseudo-labels are post-processed such that their class distribution matches a target distribution.
Because for general purposes it is impractical to assume knowledge of class distribution underlying unlabeled samples, we set this target as the uniform distribution. 


\section{Motivations and Method}\label{sec:Method}
\subsection{Notations}
A classifier's output layer (preceding softmax) is denoted as $f_{\theta}$, and the gradients with respect to its parameters as $\nabla f_{\theta}$.
We often drop the subscript $\theta$ and leave it otherwise for emphasis.
For simplicity of exposition, we describe our notations assuming a single class and note that this can easily be re-written following \citep{Arora19,zhu19,du19} for multi-class classification.
Given $N$ samples $\scriptX = \left\{x_1, \dots, x_N\right\}$ and $d$ parameters, the dimension of networks gradient is listed as $\nabla f_{\theta} \left(\scriptX\right) \in \mathbb{R}^{N \times d}$.
The Gram matrix $\scriptK_t \left(\scriptX, \scriptX'\right) := \nabla f_{\theta_t}\left(\scriptX\right) \nabla f_{\theta_t}^T\left(\scriptX \right)$ computed using parameters $\theta_t$ obtained after $t$ optimization (e.g. SGD) steps is also known as the empirical NTK \citep{Arora19}.


\subsection{Motivations}
\subsubsection{Uncertainty Embeddings}\label{sec:Uncertainty}
Motivated by the ubiquity of stochastic gradient descent (SGD) used to train deep neural networks, \citet{EGL,badge} use gradient embeddings $\nabla \scriptL \left(\theta; x, \hat{y}\right)$ to measure the uncertainty about a sample $x$ using a proxy label $\hat{y}$.
We adopt a similar view on gradients and use them to valuate unlabeled samples, except that our algorithm will make use of the network's gradients $\nabla f$ which is related to the loss gradients $\nabla \scriptL$ through the chain rule.
However, while the arguments used in above references are mainly based on the idea that uncertain samples cause large gradients $\nabla \scriptL$, this is not necessarily true for the network's gradients $\nabla f_{\theta}$.
Instead, the gradients' directions are additionally used to measure uncertainty.
A network supervised on data including a labeled sample $x_l$ will be more-so certain on that sample than on an unlabeled sample $x_u$, and the product $\nabla f_{\theta_t} (x_l)^T \nabla f_{\theta_t} (x_u)$ being small indicates uncertainty about $x_u$, and in turn that $x_u$ should be queried for its label.
We show, after presenting our algorithm in Sec. \ref{sec:Algorithm}, that our selection criterion captures uncertainty information by comparing the gradient's direction with a confident reference vector evaluated at a labeled instance.

\subsubsection{Batch-mode Operation}
Given a fixed labeling budget $\abs{\scriptY_L} \leq B$, a lower bound on the query size $Q$ is determined by how often the classifier can query the label oracle or worker.
When the worker is not to be disturbed, a large query size (e.g. $Q=B$) is necessary, and ideally an AL algorithm should attain higher performance when querying more often.
To avoid excessive numbers of queries, one of the most important traits of an AL algorithm is batch mode operation, valuating the collective importance of a candidate batch instead of its marginal elements.
Early DL-based AL algorithms were myopic $v\left(\scriptX\right) = \sum_i v(x_i)$, meaning that their valuation of samples does not consider the collective value of candidate batches.
For example, max-margin is a myopic policy and queries redundant samples \citep{batchbald} when duplicates are present. 
Algorithms that operate in the batch setting prove critical as query size becomes large.

\subsubsection{Loss Landscape and Classification Performance}
Loss landscape has long been connected to generalization (classification) performance, one view being that critical points near flat minima are more robust to distribution shifts occurring between train and test sets \citep{asymmetric}.
Gradient steps in flat landscapes that do not take into account second order information for re-scaling inevitably take small steps, but it has been observed in \citep{fastSWA} that SGD continues to take large steps when applied to losses used in SSL.
Together these views suggest that an improved loss landscape for SSL would enhance generalization performance.

To this end, one of our considerations in designing an AL algorithm is to construct a training set so that the induced landscape exhibits properties positively correlated with generalization as discussed above.
We show that in addition to the utilization of uncertainty information from network's gradients and diversity enforcement, another view for our objective is to select data that ameliorates downstream training. 
Because our algorithm improves the conditioning of downstream optimization problem, we name it Neural Pre-Conditioning (NPC).



\subsection{Algorithm}\label{sec:Algorithm}
Let $\lambda_{\min}\left(\scriptX\right)$ be the minimum eigenvalue of the symmetric Gram matrix $\scriptK_t(\scriptX, \scriptX)$.
We propose to encode the uncertainty about samples $\scriptX$ through the network's gradients $\nabla f_{\theta_t} (\scriptX)$ used to compute the Gram matrix $\scriptK_t$ and find the subset that solves
\begin{equation}\label{eq:objective}
    \max_{\scriptX_u \subset \scriptX_U} \min_{i \leq \abs{\scriptX_u \cup \scriptX_L}} \lambda_{i} \left(\scriptX_u \cup \scriptX_L\right) .
\end{equation}

\begin{algorithm*}[t]
\caption{Neural Pre-Conditioning (batch-mode solution to \eqref{eq:objective})}
\label{alg:batchCRC}
\begin{algorithmic}
    \STATE Inputs: Unlabeled pool $\scriptX_U$, acquisition size $Q$.
    \STATE Output: New pool $\scriptX_u^*$ to be labeled.
    \FOR{$i = 1, \cdots , m = \mathcal{O}\left(N_U\right)$}
        \STATE $\scriptX_u^{(i)} \gets Q$ unlabeled instances randomly sampled from $\scriptX_U$.
        \STATE $v(\scriptX_u^{(i)}) \gets \lambda_{\min}\left(\scriptX_L \cup \scriptX_u^{(i)}\right)$ using the network's Gram matrix.
    \ENDFOR
    \STATE Return $\mathcal{X}_u^* \gets \arg \max \left(v\right)$
\end{algorithmic}
\end{algorithm*}

To understand how our algorithm makes use of the direction of gradients to embed uncertainty information as argued in Sec. \ref{sec:Uncertainty}, consider the simple case of selecting one of two unlabeled samples $x_u, x_u'$.
Let $\scriptX = \{x_l, x_u, x_u'\}$ be the set containing these two and a labeled sample, and without loss of generality suppose $\abs{\nabla f(x_l)^T \nabla f(x_u)} = a > \abs{\nabla f(x_l)^T \nabla f(x_u')} = b$ with normalized gradients $\norm{\nabla f(x)} = 1$ for all $x \in \scriptX$.
Given potential labeled sets $X_1 = \{x_l, x_u\}$ and $X_2 = \{x_l, x_u'\}$, the minimum eigenvalues are $\lambda_{\min}(X_1) = 1 - a$ and $\lambda_{\min} (X_2) = 1 - b$.
Therefore, NPC measures the model's uncertainty about a sample $x_u'$ by comparing its gradient direction $\nabla f(x_u')$ with a more confident sample's $\nabla f(x_l)$ as a reference, ultimately returning a sample whose gradient direction is further away from the reference's to avoid querying for a less-informative sample.

Next we address how the algorithm enforces diversity for batch-mode queries.
Consider multisets $\scriptX$, i.e. $\scriptX$ can have duplicate elements: $\scriptX \neq \scriptX \cup \{x\}$ for any $x \in \scriptX$.
A dataset $\scriptX$ with duplicate instances is called degenerate, or equivalently any non-degenerate set $\scriptX$ has elements $\norm{x_i - x_j} > 0$ for every pair $i\neq j$ indexing samples in $\scriptX$.
We show formally that NPC provably finds only non-degenerate solutions as long as such candidates exist.
Proposition \ref{lem:degenerate} alone resolves issues present in many AL algorithms that acquire identical samples on redundant datasets such as ``repeated MNIST'' \citep{batchbald}.

\begin{proposition}[NPC finds non-degenerate solutions]\label{lem:degenerate}
    Suppose $x_i \neq x_j \Rightarrow \scriptK^{(T)} (x_i, \cdot) \neq \scriptK^{(T)} (x_j, \cdot)$ for every $x_i, x_j \in \scriptX_L \cup \scriptX_U$.
    For any degenerate $\scriptX_u$ and non-degenerate $\scriptX_u^*$ sets,
    \begin{equation}
        \lambda_{\min}\left( \scriptX_L \cup \scriptX_u^*\right) 
            >
        \lambda_{\min} \left( \scriptX_L \cup \scriptX_u\right) = 0 .
    \end{equation}
\end{proposition}
\begin{remark}
    Intuitively, the assumption $x_i \neq x_j \Rightarrow \scriptK(x_i,\cdot) \neq \scriptK(x_j,\cdot)$ means that a high dimensional vector (function's gradients) is one-to-one on the small and countable domain $\scriptX_L \cup \scriptX_U$.
    This is true at least in the neighborhood of initialization for ReLU networks as long as not too many neurons are deactivated \citep{zhu19} or for another class of networks \cite{du19}.
\end{remark}

\begin{proof}
    The proof is a simple consequence of the rank-nullity theorem and positive definiteness.
    All eigenvalues computed over non-degenerate sets $\scriptX_L \cup \scriptX_u^*$ are non-zero since row vectors of $\hat{\scriptK}^{(t)}\left(\scriptX_L \cup \scriptX_u^*\right)$ are linearly independent. Because $\scriptK$ is semi-positive definite and singular only when its row vectors are linearly dependent, LHS $>0$.
    RHS has duplicate elements in the multiset, and therefore at least two row vectors are linearly dependent.
    Consequently $\scriptK^{(t)}$ is singular, implying RHS=0.
\end{proof}

One property that can be inferred from the above proposition is that NPC consolidates labeled data.
Interestingly, existing AL algorithms do not explicitly use labeled data when querying labels.
Because the labeled set at early acquisition steps may have been constructed using a semi-random acquisition step, or its measurements of uncertainty may have been unreliable because the network had been trained on such few labels, it is important that the label set is also re-evaluated against potential candidates so that label cost is not wasted on nearly-redundant samples' labels.

\subsection{Computational Considerations}\label{sec:Computation}
Computing the Gram matrix over a given candidate $\scriptX_u$ requires summing each layer's Gram matrix as $\scriptK_t = \sum_{l=1}^L \scriptK_t^{(l)}$.
Because each Gram matrix is semi-positive definite, its minimum eigenvalue is bounded below by the last layer's as $\lambda_{\min}\left(\scriptK_t^{(L)}\right) \geq \lambda_{\min}\left(\scriptK_t\right)$.
Therefore, we use only the last layer's gradients to compute $\scriptK_t$, where the resulting objective serves as a lower bound to Eq. \eqref{eq:objective}.
Furthermore we replace each block-element whose dimension is the number of classes with its trace to save memory.

When solving the inner-minimization, the kernel's value over labeled samples can be stored and re-used for every candidate batch $\scriptX_u$.
We compute the minimum eigenvalue using the robust and efficient locally optimal block preconditioned conjugate gradient method \citep{lobpcg}.
However, the search space of Eq. \eqref{eq:objective} is combinatorial in the pool size and query size.
Therefore we approximate the solution to Eq. \eqref{eq:objective} by sampling $m=\mathcal{O}\left(N_U\right)$ subsets uniformly at random to match the runtime of myopic algorithms, where $N_U$ is the unlabeled set's size.
By the inclusion-exclusion principle, the top $r N_U$ batches, with $r \in [0, 1]$, are included in the search space with probability $(1-r)^m$.
Taking $m = 1000$ as an example, the randomized search returns a batch within the 99-percentile with probability $\geq 1 - 4 \cdot 10^{-5}$.

\section{Discussion}\label{sec:Discussion}
\subsection{A Better Optimization Plateau for Generalization}
As motivated earlier, flattening out the landscape has positive implications towards generalization.
Here we prove that the landscape induced by labels acquired using NPC allows larger step sizes for convergence, which in turn leads to faster convergence towards flat landscapes.
At least for shallow 2-layer networks, increasing the convergence rate also reduces the generalization error \citep{Arora2019FineGrainedAO}.

For only this section, assume a non-degenerate training set: $\norm{x_i - x_j} > 0$ for each $i \neq j$.
\begin{theorem}\label{thm:main}
    At each gradient descent iteration $t$ with step size $\eta = \mathcal{O}(\lambda_{\min}\left(\scriptK_{0}\right))$, the MSE loss $\scriptL$ of a properly-initialized, sufficiently wide ReLU network decays as
    \begin{equation}\label{eq:convergence}
        \scriptL_{t+1} \leq \left(1- \mathcal{O}\left(\eta \lambda_{\min} \left(\scriptK_t \right)\right) \right) \scriptL_t 
    \end{equation}    
    with high probability over initialization.
\end{theorem}
Note that NTK-analyses typically express the training dynamics as a function of $\scriptK_0, \scriptK_{\infty},$ or the true NTK.
Although this can be done with additional perturbation analysis, we leave it at this form since we are concerned with the eigenvalue of the network's Gram matrix.

Two remarks follow.
First, the above shows that the set of step-sizes under which gradient descent converges is determined by $\lambda_{\min}\left(\scriptK_{0}\right)$.
The kernel $\scriptK_t$ essentially stays constant throughout training for a sufficiently wide network and is fixed as $\scriptK_{0}$ for simplicity.
Therefore, gradient descent can take large step-sizes and still converge when the labeled dataset is constructed using NPC.
By maximizing $\lambda_{\min}\left(\scriptK_\infty\right)$, where $\scriptK_{\infty}$ is the Gram matrix of a classifier trained until near-convergence, NPC improves both training and generalization.
Second is the withstanding of Thm. \ref{thm:main} when the the computation of $\scriptK$ is reduced by using the last layer's Gram matrix. 
We described in Sec. \ref{sec:Computation} that our NPC algorithm solves Eq. \eqref{eq:objective} by replacing $\scriptK_t$ in $\lambda_{\min}(\scriptK_t)$ with the last layer's Gram matrix.
As shown, training and generalization benefits that come from solving Eq. \eqref{eq:objective} still hold when using the network's last layer to compute the kernel.

\subsection{Benefits of Randomized Search}
The alternation between querying for labels and training can be interpreted as a feedback system, which illustrates the exploration vs. exploitation effect of randomization used to solve Eq. \eqref{eq:objective} and complements the view that the network's uncertainty about samples is minimized with more labels.
A state described by trained parameters $\theta$ and training set $\mathcal{Z}_L, \scriptX_U$ is used by a policy, which selects $\scriptX_u^*$ and acquires (observes) $\scriptY_u^*$.
At the first acquisition step, the network's gradients are unreliable measures of uncertainty embeddings due to a lack of labeled samples.
Subsequent states are then updated by propagating newly acquired labels to train the network so that gradient embeddings better represent uncertainty about samples.
At early acquisition stages, the randomized search therefore encourages the acquisition policy to explore instead of relying excessively on its belief.
The search space size decreases with more acquisition steps and therefore the policy progressively exploits its belief.



Our algorithm's effect on generalization error can also be understood by studying the infinite-width regime, where we view randomization to act as a regularization method considering that we use finite-width networks.
A finite-width network's prediction is approximately kernel ridgeless regression \citep{Arora19}.
\citet{bordelon2021spectrum} showed that for kernel ridgeless regression, a training point reduces generalization error at modes corresponding to larger eigenvalues.
Our objective in Eq. \eqref{eq:objective} selects a training set that maximizes the minimum eigenvalue, and therefore enhances data efficiency in the sense that generalization error is affected at as many modes as possible.
However, finite-width ConvNets are generally better classifiers \citep{Arora2020Harnessing,Pennington} which were therefore used to query for labels.
The eigen-spectrum of a finite width network's Gram matrix is not identical to the NTK and solving the objective exactly may not directly translate to more generalization modes as for infinite width networks.

\section{Experiments}\label{sec:experiments}

\subsection{Implementation Details}
We adopt all SSL-related configurations from \citep{OliverRealistic} and use the WRN-28-2 architecture \citep{wide_resnet} for all experiments.
At the first acquisition step, we randomly sampled $1$ image per class and used the model that attained median performance across 5 trials.
Subsequent acquisitions were performed with query size $Q = 20$ for CIFAR-10 and $Q=200$ for CIFAR-100.
All performances are averaged over 3 trials.
Following most AL setups, we train classifiers from scratch after each acquisition step.
Training from scratch better assesses the value of labels as it mitigates the possibility of vicious cycles where models trained sub-optimally in previous acquisition steps have no hope of improving despite superb data.

As discussed earlier, we assume no a-priori information on class distribution underlying unlabeled data.
Instead of estimating the underlying class distribution as done by \citet{darp} which may be detrimental given few labels, we simply take the target pseudo-label distribution to be uniform and perform pseudo-label refinement accordingly.


\begin{table}[t]
\caption{CIFAR-10 $\left(Q_0 = 10, Q=20\right)$: Average accuracy (\%) $\pm$ standard deviation. 
Initial model achieved $57.64\%$ accuracy.
}
    \centering
        \begin{tabular}{c|ccc}
        \toprule
        \backslashbox{Algorithm}{\# Labels} & 30 & 50 & 70 \\
        \midrule
        Passive & $78.08 \pm 5.48$ & $91.82 \pm 2.30$ & $91.00 \pm 2.78$ \\ 
        Margin & $87.36 \pm 5.01$ & $90.85 \pm 3.64$ & $90.85 \pm 3.64$ \\ 
        ALBL & $80.61 \pm 12.5$ & $89.62 \pm 6.66$ & $94.45\pm 0.20$ \\
        BADGE & $80.60 \pm 4.46$ & $86.43 \pm 1.22$ & $80.32 \pm 7.76$ \\
        NP$\text{C}^{\dagger}$   & $ 85.09 \pm 9.61$ & $\mathbf{94.63 \pm 0.07}$ & $\mathbf{94.85 \pm 0.02}$ 
        \\ \bottomrule
        \end{tabular}
\label{tab:cifar10}
\end{table}

\subsection{Baseline Algorithms}\label{baseline}
The proposed NPC algorithm is compared with passive learning where labels are queried uniformly at random, margin \citep{margin}, active learning by learning (ALBL, \citet{ALBL}) comprising least confidence and Coreset \citep{coreset}, and BADGE.
Margin evaluates the classifier's margin and selects $Q$ samples whose margin is lowest.
ALBL employs a two-armed adversarial bandit algorithm to adapt to the better of least confidence $\arg \min_i f\left(x_i\right)$ and Coreset.
BADGE was described earlier, and acquires samples by applying k-means++ on gradient embeddings.
Entropy was also considered but excluded because of its low performance on some experiments.



\subsection{Performance}

\begin{table*}[th]
\caption{CIFAR-100 $\left(Q_0 = 100, Q = 200\right)$: Average accuracy (\%) $\pm$ standard deviation. 
Initial model achieved $21.29\%$ accuracy.
}
    \begin{tabular}{c|c c c c}
    \toprule
    \backslashbox{Algorithm}{\# Labels} & 300 & 500 & 700 & 900 \\
    \midrule
    Passive & $37.98 \pm 1.89$ & $47.11 \pm 1.41$ & $52.69 \pm 1.16$ & $57.97 \pm 1.43$  \\ 
    Margin & $37.41 \pm 1.22$ & $48.21 \pm 3.76$ & $52.53 \pm 0.75$ & $57.03 \pm 0.39$ \\ 
    ALBL & $39.05\pm 0.62$ & $\mathbf{49.88\pm 0.92}$ & $54.23 \pm 1.78$ & $56.65\pm 0.86$ \\
    BADGE & $24.55 \pm 1.13$ & $25.21\pm 1.86$ & $28.03 \pm 1.74$ & $29.81 \pm 1.72$ \\
    NP$\text{C}^{\dagger}$   & $\mathbf{40.92 \pm 1.65}$ & $48.16 \pm 0.77$ & $\mathbf{55.76 \pm 1.45}$ & $\mathbf{58.58 \pm 0.52}$ 
    \\ \bottomrule
    \end{tabular}
\label{tab:cifar100}
\end{table*}

Tables \ref{tab:cifar10} and \ref{tab:cifar100} show the accuracy of AL algorithms when trained on CIFAR-10 and CIFAR-100, respectively.
NPC outperforms other label acquisition schemes on nearly all dataset sizes and is at least competitive on the few others.
Although BADGE is state-of-the-art on AL benchmarks, we observe older algorithms performing better when evaluated by SSL accuracy.
This reveals how existing AL algorithms have been evaluated by their efficiency of sample acquisitions rather than label complexity.

To complement performances, Fig. \ref{fig:IoU} illustrates the similarity between algorithms as the intersection over union (IoU) of label indices as more labels are collected on CIFAR-100.
At any given label set size $N_L$, an algorithm's label set is the union of labels acquired at different trials.
Margin and least-confidence both rely heavily on the classifier's predictions, and labels acquired at different trials overlap significantly.
As shown, ALBL and Margin are similar in how labels acquired, which describes that using a classifier's least confidence is similar to acquiring based on its margin.
On the other hand, other pairs of algorithms have very small overlaps, demonstrating that their acquisition criteria are drastically different.

\begin{figure}[h] 
    \centering
    \includegraphics[height=4cm,width=\textwidth]{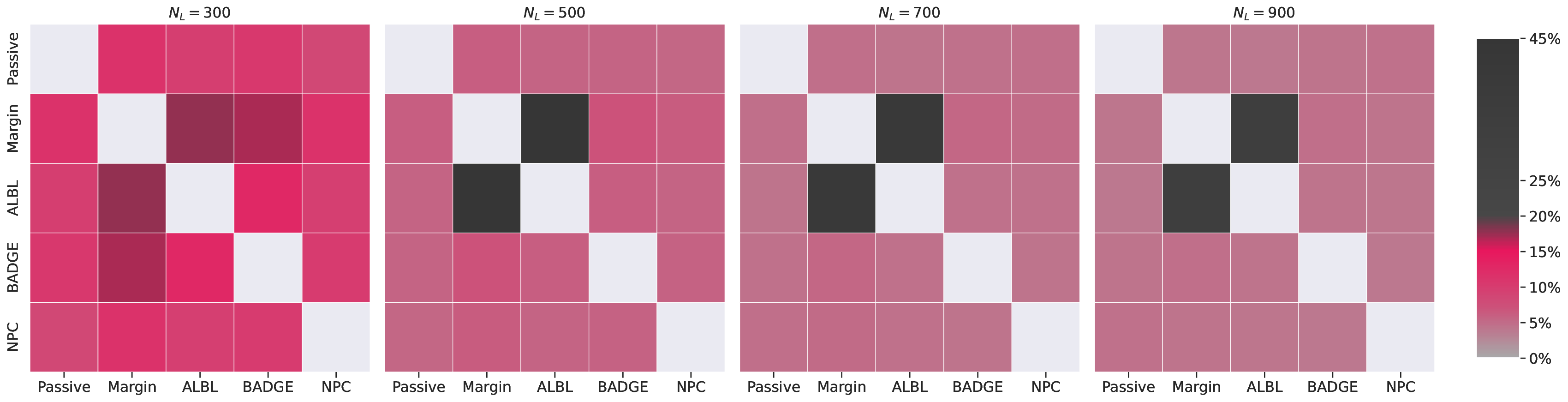} 
    \caption{Similarity between AL algorithms: Intersection over union (IoU\%) of labels commonly acquired by algorithms.}
    \label{fig:IoU}
\end{figure}



\subsection{Reducing Inquiry Frequency}\label{sec:zero_shot}
The above experiments aim to maximize accuracy with budget constraints on the number of labels.
Certain applications may additionally require that the number of inquiries is minimized to reduce the frequency of interaction between classifier and annotator.
It is clear that RANDOM remains unaffected by the number of inquiries, and it is desirable that AL algorithms maintain high performance when fewer inquiries are possible.

As observed in Tab. \ref{tab:cifar10}, NPC achieves very high performance at $N_L = 50$.
To accommodate a limit on the number of inquiries, we experiment with how NPC and BADGE, selected based on their similarity, are affected on what we call single and zero shot AL, both referring to a single query given a model trained on very few labels ($N_L = 10$) and a randomly initialized ($N_L = 0$) model, respectively. 
As shown in Tab. \ref{tab:single_shot}, NPC excels in both zero and single shot settings where an imperfect classifier is used to valuate samples.
Interestingly, NPC remains nearly unaffected by which model is used to query for samples in zero or single shot queries.
In contrast, BADGE is detrimentally affected by its over-reliance on gradient embeddings on single-shot AL and rather performs better in zero-shot queries when gradients are randomly initialized.


\begin{table}[h]
\caption{Zero-shot and Single-shot AL on CIFAR 10 using $Q \in \left\{30, 50\right\}$, where zero-shot refers to acquisition using a randomly initialized model and single-shot to a model trained on 1 label per class.
}\label{tab:single_shot}
\centering
    \begin{tabular}{ccccc}
        \toprule
         & \multicolumn{2}{c}{Zero Shot} & \multicolumn{2}{c}{Single Shot} \\
        \# Labels & 40 & 60 & 40 & 60\\
        \midrule
        BADGE & $92.13 \pm 3.24$ & $94.42 \pm 0.41$ & $86.70 \pm 5.42$ & $70.28 \pm 13.22$\\
        NP$\text{C}^{\dagger}$   & $93.14 \pm 1.42$ & $93.16 \pm 2.34$ & $92.26 \pm 3.26$ & $93.76 \pm 0.58$
        \\ \bottomrule
        \end{tabular}
\end{table}




A few questions arise from this observation. 
The fact that BADGE performs worse on single-shot AL with a larger query size highlights that gradients alone may not be informative features in valuating samples as assumed.
NPC also uses gradients as features for valuating samples, but its robust performance with respect to number of queries can be attributed to our conclusion from theoretical analysis where problem conditioning is directly affected.

It's surprising how both BADGE and NPC perform extremely well on zero-shot AL, where a randomly initialized model decides which labels are most valuable.
For comparison, FixMatch on \emph{balanced data} without DARP reportedly achieves $86.19$\%, comparable to BADGE on single-shot but under-performing both BADGE and NPC on zero-shot.
To explain this phenomenon, it is instructive to view randomly initialized networks in their asymptotic limits.

At first glance it may appear that NPC with a randomly initialized network should not work well.
However, wide networks at initialization approximate their infinite-width NTK \citep{Arora19}.
As mentioned earlier, fully-trained wide networks are essentially ridge regression $\hat{y}_{ridge}\left(x_{test}\right) = \scriptK\left(x_{test}, X\right) \scriptK^{-1}\left(X, X\right) y$, and zero-shot NPC translates to a construction of the above kernel on which ridge regression will be performed.
A training point influences the generalization of kernel regression more for modes corresponding to large eigenvalues \citep{bordelon2021spectrum}, which is maximized by NPC.
In summary, NPC using a randomly initialized network selects samples to maximize generalization performance as predicted by approximate kernel regression through the NTK spectrum. 

\section{Conclusion}
This work motivated downstream SSL performance as a benchmark to evaluate AL algorithms.
We then described motivations recurrent in previous works and proposed an AL algorithm that addresses these concerns.
The proposed NPC algorithm captures uncertainty through the model's gradients, operates in the batch-mode setting, and improves the landscape of downstream SSL through data acquisition as measured by properties related to generalization.
Experiments re-evaluating state-of-the-art AL algorithms with respect to downstream SSL performance, which better measures label complexity, demonstrate that NPC outperforms other AL algorithms on most dataset sizes and tasks.

NPC enjoys several properties that aren't present in other AL algorithms or is at least not obvious.
First, NPC explicitly consolidates existing labeled data when measuring the value of labeling unlabeled candidates.
Moreover, NPC is a batch AL algorithm that provably selects distinct samples.
The proposed algorithm is also interesting in that it is a kernel-based sampling scheme.
Kernels are excellent models of data distributions, and NPC's construction of a kernel using DL opens new venues for AL.

A few limitations and future works are described.
Our experiments rely on modern SSL algorithms to evaluate AL algorithms.
Although current SSL algorithms achieve extremely high accuracy on vision tasks, they suffer from algorithmic instability where given the same model and dataset, their performances vary more-so than supervised learning.
Ideally, all algorithms should achieve higher accuracy in line with the ``more data is better'' principle.
Because performance deterred by class imbalance is resolved using pseudo label refinements, we believe experimental evaluations will benefit most from algorithmic stability. 
Further, SSL training demands much more computation than SL counterparts, and consequently an exhaustive evaluation of various AL algorithms is prohibitive.
Experimental protocols that reduce computations in evaluating algorithms yet are fair would expedite research.
Lastly, we treat AL and SSL phases independently for our purpose.
An interesting direction to pursue would be to design AL and SSL schemes that adapt to each other.
For example, our theoretical analysis and NPC's valuation gives an upper bound on possible learning rates for downstream training.
By designing learning schedules to adapt to the set of admissable step sizes, downstream training may be better stabilized and achieve higher performance.

\begin{ack}
    We thank Professor R. Srikant for helpful discussions relating to Neural Tangent Kernels.
\end{ack}

\bibliography{example_paper}

\begin{thebibliography}{34}
\providecommand{\natexlab}[1]{#1}
\providecommand{\url}[1]{\texttt{#1}}
\expandafter\ifx\csname urlstyle\endcsname\relax
  \providecommand{\doi}[1]{doi: #1}\else
  \providecommand{\doi}{doi: \begingroup \urlstyle{rm}\Url}\fi

\bibitem[Allen-Zhu et~al.(2019)Allen-Zhu, Li, and Song]{zhu19}
Allen-Zhu, Z., Li, Y., and Song, Z.
\newblock A convergence theory for deep learning via over-parameterization.
\newblock \emph{arXiv preprint arXiv:1811.03962}, 2019.

\bibitem[Arora et~al.(2019{\natexlab{a}})Arora, Du, Hu, Li, Salakhutdinov, and
  Wang]{Arora19}
Arora, S., Du, S.~S., Hu, W., Li, Z., Salakhutdinov, R., and Wang, R.
\newblock On exact computation with an infinitely wide neural net.
\newblock \emph{CoRR}, abs/1904.11955, 2019{\natexlab{a}}.

\bibitem[Arora et~al.(2019{\natexlab{b}})Arora, Du, Hu, Li, and
  Wang]{Arora2019FineGrainedAO}
Arora, S., Du, S.~S., Hu, W., Li, Z., and Wang, R.
\newblock Fine-grained analysis of optimization and generalization for
  overparameterized two-layer neural networks.
\newblock In \emph{International Conference on Machine Learning},
  2019{\natexlab{b}}.

\bibitem[Arora et~al.(2020)Arora, Du, Li, Salakhutdinov, Wang, and
  Yu]{Arora2020Harnessing}
Arora, S., Du, S.~S., Li, Z., Salakhutdinov, R., Wang, R., and Yu, D.
\newblock Harnessing the power of infinitely wide deep nets on small-data
  tasks.
\newblock In \emph{International Conference on Learning Representations}, 2020.

\bibitem[Ash et~al.(2020)Ash, Zhang, Krishnamurthy, Langford, and
  Agarwal]{badge}
Ash, J.~T., Zhang, C., Krishnamurthy, A., Langford, J., and Agarwal, A.
\newblock Deep batch active learning by diverse, uncertain gradient lower
  bounds.
\newblock In \emph{International Conference on Learning Representations}, 2020.

\bibitem[Athiwaratkun et~al.(2019)Athiwaratkun, Finzi, Izmailov, and
  Wilson]{fastSWA}
Athiwaratkun, B., Finzi, M., Izmailov, P., and Wilson, A.~G.
\newblock There are many consistent explanations of unlabeled data: Why you
  should average.
\newblock In \emph{International Conference on Learning Representations}, 2019.

\bibitem[Ben-David et~al.(2010)Ben-David, Blitzer, Crammer, Kulesza, Pereira,
  and Vaughan]{Hdivergence}
Ben-David, S., Blitzer, J., Crammer, K., Kulesza, A., Pereira, F., and Vaughan,
  J.
\newblock A theory of learning from different domains.
\newblock \emph{Machine Learning}, 79:\penalty0 151--175, 2010.

\bibitem[Berthelot et~al.(2019)Berthelot, Carlini, Goodfellow, Papernot,
  Oliver, and Raffel]{berthelot2019mixmatch}
Berthelot, D., Carlini, N., Goodfellow, I., Papernot, N., Oliver, A., and
  Raffel, C.
\newblock Mixmatch: A holistic approach to semi-supervised learning.
\newblock \emph{arXiv preprint arXiv:1905.02249}, 2019.

\bibitem[Berthelot et~al.(2020)Berthelot, Carlini, Cubuk, Kurakin, Sohn, Zhang,
  and Raffel]{remixmatch}
Berthelot, D., Carlini, N., Cubuk, E.~D., Kurakin, A., Sohn, K., Zhang, H., and
  Raffel, C.
\newblock Remixmatch: Semi-supervised learning with distribution matching and
  augmentation anchoring.
\newblock In \emph{International Conference on Learning Representations}, 2020.

\bibitem[Bordelon et~al.(2021)Bordelon, Canatar, and
  Pehlevan]{bordelon2021spectrum}
Bordelon, B., Canatar, A., and Pehlevan, C.
\newblock Spectrum dependent learning curves in kernel regression and wide
  neural networks, 2021.

\bibitem[Du et~al.(2019)Du, Lee, Li, Wang, and Zhai]{du19}
Du, S., Lee, J., Li, H., Wang, L., and Zhai, X.
\newblock Gradient descent finds global minima of deep neural networks.
\newblock volume~97 of \emph{Proceedings of Machine Learning Research}, pp.\
  1675--1685, Long Beach, California, USA, 09--15 Jun 2019. PMLR.

\bibitem[Gal \& Ghahramani(2016)Gal and Ghahramani]{bayesian_dnn}
Gal, Y. and Ghahramani, Z.
\newblock Dropout as a bayesian approximation: Representing model uncertainty
  in deep learning.
\newblock In Balcan, M.~F. and Weinberger, K.~Q. (eds.), \emph{Proceedings of
  The 33rd International Conference on Machine Learning}, volume~48 of
  \emph{Proceedings of Machine Learning Research}, pp.\  1050--1059, New York,
  New York, USA, 20--22 Jun 2016. PMLR.

\bibitem[Gissin \& Shalev-Shwartz(2019)Gissin and
  Shalev-Shwartz]{ShalevSchwartz}
Gissin, D. and Shalev-Shwartz, S.
\newblock Discriminative active learning.
\newblock \emph{arXiv preprint arXiv:1907.06347}, 2019.

\bibitem[Guo et~al.(2017)Guo, Pleiss, Sun, and Weinberger]{calibration}
Guo, C., Pleiss, G., Sun, Y., and Weinberger, K.~Q.
\newblock On calibration of modern neural networks.
\newblock In \emph{Proceedings of the 34th International Conference on Machine
  Learning - Volume 70}, ICML'17, pp.\  1321–1330. JMLR.org, 2017.

\bibitem[Hanneke(2007)]{Henneke_agnostic}
Hanneke, S.
\newblock A bound on the label complexity of agnostic active learning.
\newblock In \emph{Proceedings of the 24th International Conference on Machine
  Learning}, ICML '07, pp.\  353–360, New York, NY, USA, 2007. Association
  for Computing Machinery.
\newblock ISBN 9781595937933.

\bibitem[He et~al.(2019)He, Huang, and Yuan]{asymmetric}
He, H., Huang, G., and Yuan, Y.
\newblock Asymmetric valleys: Beyond sharp and flat local minima.
\newblock In Wallach, H., Larochelle, H., Beygelzimer, A., d\textquotesingle
  Alch\'{e}-Buc, F., Fox, E., and Garnett, R. (eds.), \emph{Advances in Neural
  Information Processing Systems}, volume~32, pp.\  2553--2564. Curran
  Associates, Inc., 2019.

\bibitem[Hsu \& Lin(2015)Hsu and Lin]{ALBL}
Hsu, W.-N. and Lin, H.-T.
\newblock Active learning by learning.
\newblock \emph{Proceedings of the AAAI Conference on Artificial Intelligence},
  29\penalty0 (1), Feb. 2015.

\bibitem[Huang et~al.(2016)Huang, Child, Rao, Liu, Satheesh, and Coates]{EGL}
Huang, J., Child, R., Rao, V., Liu, H., Satheesh, S., and Coates, A.
\newblock Active learning for speech recognition: the power of gradients.
\newblock \emph{arXiv preprint}, 12 2016.

\bibitem[Kim et~al.(2020)Kim, Hur, Park, Yang, Hwang, and Shin]{darp}
Kim, J., Hur, Y., Park, S., Yang, E., Hwang, S.~J., and Shin, J.
\newblock Distribution aligning refinery of pseudo-label for imbalanced
  semi-supervised learning.
\newblock In Larochelle, H., Ranzato, M., Hadsell, R., Balcan, M.~F., and Lin,
  H. (eds.), \emph{Advances in Neural Information Processing Systems},
  volume~33, pp.\  14567--14579. Curran Associates, Inc., 2020.

\bibitem[Kirsch et~al.(2019)Kirsch, van Amersfoort, and Gal]{batchbald}
Kirsch, A., van Amersfoort, J., and Gal, Y.
\newblock Batchbald: Efficient and diverse batch acquisition for deep bayesian
  active learning.
\newblock In Wallach, H., Larochelle, H., Beygelzimer, A., d\textquotesingle
  Alch\'{e}-Buc, F., Fox, E., and Garnett, R. (eds.), \emph{Advances in Neural
  Information Processing Systems 32}, pp.\  7026--7037. Curran Associates,
  Inc., 2019.

\bibitem[Laine \& Aila(2017)Laine and Aila]{pi_model}
Laine, S. and Aila, T.
\newblock Temporal ensembling for semi-supervised learning.
\newblock In \emph{ICLR (Poster)}. OpenReview.net, 2017.

\bibitem[Lee(2013)]{pseudolabel}
Lee, D.-H.
\newblock Pseudo-label : The simple and efficient semi-supervised learning
  method for deep neural networks.
\newblock \emph{ICML 2013 Workshop : Challenges in Representation Learning
  (WREPL)}, 07 2013.

\bibitem[Lee et~al.(2021)Lee, Shin, and Kim]{lee2021abc}
Lee, H., Shin, S., and Kim, H.
\newblock {ABC}: Auxiliary balanced classifier for class-imbalanced
  semi-supervised learning.
\newblock In Beygelzimer, A., Dauphin, Y., Liang, P., and Vaughan, J.~W.
  (eds.), \emph{Advances in Neural Information Processing Systems}, 2021.

\bibitem[Lee et~al.(2020)Lee, Schoenholz, Pennington, Adlam, Xiao, Novak, and
  Sohl-Dickstein]{Pennington}
Lee, J., Schoenholz, S.~S., Pennington, J., Adlam, B., Xiao, L., Novak, R., and
  Sohl-Dickstein, J.
\newblock Finite versus infinite neural networks: An empirical study.
\newblock In \emph{Proceedings of the 34th International Conference on Neural
  Information Processing Systems}, NIPS'20, Red Hook, NY, USA, 2020. Curran
  Associates Inc.
\newblock ISBN 9781713829546.

\bibitem[Oliver et~al.(2018)Oliver, Odena, Raffel, Cubuk, and
  Goodfellow]{OliverRealistic}
Oliver, A., Odena, A., Raffel, C.~A., Cubuk, E.~D., and Goodfellow, I.
\newblock Realistic evaluation of deep semi-supervised learning algorithms.
\newblock In Bengio, S., Wallach, H., Larochelle, H., Grauman, K.,
  Cesa-Bianchi, N., and Garnett, R. (eds.), \emph{Advances in Neural
  Information Processing Systems 31}, pp.\  3235--3246. Curran Associates,
  Inc., 2018.

\bibitem[Roth \& Small(2006)Roth and Small]{margin}
Roth, D. and Small, K.
\newblock Margin-based active learning for structured output spaces.
\newblock In F{\"u}rnkranz, J., Scheffer, T., and Spiliopoulou, M. (eds.),
  \emph{Machine Learning: ECML 2006}, pp.\  413--424, Berlin, Heidelberg, 2006.
  Springer Berlin Heidelberg.

\bibitem[Sener \& Savarese(2017)Sener and Savarese]{coreset}
Sener, O. and Savarese, S.
\newblock Active learning for convolutional neural networks: A core-set
  approach.
\newblock \emph{arXiv preprint arXiv:1708.00489}, 2017.

\bibitem[Sohn et~al.(2020)Sohn, Berthelot, Li, Zhang, Carlini, Cubuk, Kurakin,
  Zhang, and Raffel]{fixmatch}
Sohn, K., Berthelot, D., Li, C.-L., Zhang, Z., Carlini, N., Cubuk, E.~D.,
  Kurakin, A., Zhang, H., and Raffel, C.
\newblock Fixmatch: Simplifying semi-supervised learning with consistency and
  confidence.
\newblock \emph{arXiv preprint arXiv:2001.07685}, 2020.

\bibitem[Song et~al.(2019)Song, Berthelot, and Rostamizadeh]{MixMatchAL}
Song, S., Berthelot, D., and Rostamizadeh, A.
\newblock Combining mixmatch and active learning for better accuracy with fewer
  labels.
\newblock \emph{arXiv preprint arXiv:1912.00594}, 2019.

\bibitem[Stathopoulos \& Wu(2002)Stathopoulos and Wu]{lobpcg}
Stathopoulos, A. and Wu, K.
\newblock A block orthogonalization procedure with constant synchronization
  requirements.
\newblock \emph{SIAM Journal on Scientific Computing}, 23\penalty0
  (6):\penalty0 2165--2182, 2002.

\bibitem[Tarvainen \& Valpola(2017)Tarvainen and Valpola]{mean_teacher}
Tarvainen, A. and Valpola, H.
\newblock Mean teachers are better role models: Weight-averaged consistency
  targets improve semi-supervised deep learning results.
\newblock In Guyon, I., Luxburg, U.~V., Bengio, S., Wallach, H., Fergus, R.,
  Vishwanathan, S., and Garnett, R. (eds.), \emph{Advances in Neural
  Information Processing Systems 30}, pp.\  1195--1204. Curran Associates,
  Inc., 2017.

\bibitem[Wang \& Shang(2014)Wang and Shang]{uncertainty}
Wang, D. and Shang, Y.
\newblock A new active labeling method for deep learning.
\newblock In \emph{2014 International joint conference on neural networks
  (IJCNN)}, pp.\  112--119. IEEE, 2014.

\bibitem[Wang et~al.(2016)Wang, Zhang, Li, Zhang, and Lin]{CEAL}
Wang, K., Zhang, D., Li, Y., Zhang, R., and Lin, L.
\newblock Cost-effective active learning for deep image classification.
\newblock \emph{IEEE Transactions on Circuits and Systems for Video
  Technology}, 27\penalty0 (12):\penalty0 2591--2600, 2016.

\bibitem[Zagoruyko \& Komodakis(2016)Zagoruyko and Komodakis]{wide_resnet}
Zagoruyko, S. and Komodakis, N.
\newblock Wide residual networks.
\newblock In \emph{BMVC}, 2016.

\end{thebibliography}
\bibliographystyle{icml2022}


\section*{Checklist}


\begin{enumerate}

\item For all authors...
\begin{enumerate}
  \item Do the main claims made in the abstract and introduction accurately reflect the paper's contributions and scope?
    \answerYes
  \item Did you describe the limitations of your work?
    \answerYes
  \item Did you discuss any potential negative societal impacts of your work?
    \answerNA{}
  \item Have you read the ethics review guidelines and ensured that your paper conforms to them?
    \answerYes{}
\end{enumerate}

\item If you are including theoretical results...
\begin{enumerate}
  \item Did you state the full set of assumptions of all theoretical results?
    \answerNo{We defer the precise arguments to the Appendix. The theorem serves as an answer to our motivations, and we believe the precise assumptions deter from the main message.}
        \item Did you include complete proofs of all theoretical results?
    \answerYes{Included in Appendix.}
\end{enumerate}

\item If you ran experiments...
\begin{enumerate}
  \item Did you include the code, data, and instructions needed to reproduce the main experimental results (either in the supplemental material or as a URL)?
    \answerYes{}
  \item Did you specify all the training details (e.g., data splits, hyperparameters, how they were chosen)?
    \answerYes{}
        \item Did you report error bars (e.g., with respect to the random seed after running experiments multiple times)?
    \answerYes{}
        \item Did you include the total amount of compute and the type of resources used (e.g., type of GPUs, internal cluster, or cloud provider)?
    \answerNo{Computational requirements follow standard semi-supervised learning settings.}
\end{enumerate}

\item If you are using existing assets (e.g., code, data, models) or curating/releasing new assets...
\begin{enumerate}
  \item If your work uses existing assets, did you cite the creators?
    \answerYes{}
  \item Did you mention the license of the assets?
    \answerNo{Data and base code used for experiments are described.}
  \item Did you include any new assets either in the supplemental material or as a URL?
    \answerNA{}
  \item Did you discuss whether and how consent was obtained from people whose data you're using/curating?
    \answerNA{}
  \item Did you discuss whether the data you are using/curating contains personally identifiable information or offensive content?
    \answerNA{}
\end{enumerate}

\item If you used crowdsourcing or conducted research with human subjects...
\begin{enumerate}
  \item Did you include the full text of instructions given to participants and screenshots, if applicable?
    \answerNA{}
  \item Did you describe any potential participant risks, with links to Institutional Review Board (IRB) approvals, if applicable?
    \answerNA{}
  \item Did you include the estimated hourly wage paid to participants and the total amount spent on participant compensation?
    \answerNA{}
\end{enumerate}

\end{enumerate}

\appendix

\section{Proof of Theorem 1}
Assume a non-degenerate training set $\norm{x_i - x_j} > 0, \forall i \neq j$.
Theorem 1 in the main script is re-written:
\begin{theorem}\label{thm:main}
    At each gradient descent iteration $t$ with step size $\eta = \mathcal{O}(\lambda_{\min}\left(\scriptK_0\right))$, the MSE loss $\scriptL$ suffered by a properly-initialized feedforward ReLU network decays as
    \begin{equation}
        \scriptL_{t+1} \leq \left(1- \mathcal{O}\left(\eta \lambda_{\min} \left(\scriptK_t \right)\right) \right) \scriptL_t 
    \end{equation} 
    with high probability over initialization.
\end{theorem}

We adopt the convention that all gradients are flattened in vector form and use the Euclidean norms to represent their size.
First we express training dynamics as a recursion:
\begin{lemma}\label{lem:du}
        Feedforward DNNs with once-differentiable activation functions trained using gradient descent on the MSE loss $\scriptL_t$ with step size $\eta$ follows the recursion:
    \begin{equation}\label{eq:lemma}
        \scriptL_{t+1} \leq \left(1-\eta \lambda_{\min}\left(\scriptK_t\right)\right) \scriptL_t + \xi_t + \epsilon_t
        ,
    \end{equation}
    where $\xi_t
        =
        \int_{0}^{\eta} \nabla \scriptL_{t}^T \left(\nabla \scriptL_{t} - \nabla \scriptL(\theta_t - \gamma \nabla \scriptL_t )\right)d\gamma$ and
        $\epsilon_t =
        \frac{1}{2}(f_{\theta_{t+1}} - f_{\theta_t})^2$.
\end{lemma}
\begin{proof}
This derivation is mostly from \cite{du19}, but we include the proof under our notations for completeness. Let $e_t = y - f_{\theta_t}$.
A standard technique with triangular inequality gives
\begin{equation}\label{eq:basic}
    \scriptL_{t+1} \leq \scriptL_t + \norm{f_{\theta_{t+1}} - f_{\theta_t}}^2 - 2 e_t^T \left(f_{\theta_{t+1}} - f_{\theta_t}\right).
\end{equation}
Let $h(\eta) = f(\theta_t - \eta \nabla \scriptL_t)$. 
By the fundamental theorem of calculus,
\begin{align*}
    f_{\theta_{t+1}} - f(\theta_t) 
        =
    h(\eta) - h(0) \\
    = \int_0^{\eta} h'(\gamma) d\gamma
    = \int_0^{\eta} h'(0) d\gamma + \int_0^{\eta} h'(\gamma) - h'(0) d\gamma
\end{align*}
Since $h'(0) = - \nabla f(\theta_t)^T \nabla \scriptL_t = - e \nabla f_{\theta_t}^T \nabla f_{\theta_t}= - e \text{Tr}\left(\scriptK_t\right)$, we have
\begin{align*}
    e^T (f_{\theta_{t+1}} - f_{\theta_t}) = - \eta e^T \scriptK_t e + \int_0^{\eta} h'(\gamma) - h'(0)d \gamma
    \leq -\eta \lambda_{\min}\left(\scriptK_t\right) \scriptL_t  + \xi_t.
\end{align*}
Substituting into Eq. \ref{eq:basic} gives Eq. \ref{eq:lemma} together with $e_t \int_0^{\eta} h'(\gamma) - h'(0)d\gamma = \int_0^{\eta} \nabla \scriptL_t^T \left(\nabla \scriptL_t - \nabla \scriptL(\theta_t - \gamma \nabla \scriptL_t\right)d\gamma$.    
\end{proof}

The above bound sheds light on training dynamics, where the first term decreases linearly with rate determined by the Gram matrix' eigenvalue.
To establish Thm. \ref{thm:main} that states the loss descends at each gradient step, it remains to prove that residual terms $\xi_t, \epsilon_t$ grow (sub-)linearly with $\scriptL_t$.

An extension of smoothness and convexity is defined following \citep{zhu19}:
\begin{definition}[Smoothness]
    A non-negative, once-differentiable function $g\in C^1 (\scriptX)$ is $(\alpha, \beta)$-smooth if for every $x, y \in \scriptX$,
    \begin{align*}
        g(y) \leq g(x) + \nabla g(x)^T (y-x) + \alpha \sqrt{g(x)}\norm{y-x} + \beta \norm{y-x}^2 \numberthis
    \end{align*}
\end{definition}
\begin{definition}[Near-Convexity]
    A non-negative function $g \in C^1 (\scriptX)$ has gradients $\nabla g$ that scale as $(\mu, M)$ if
    \begin{equation}
        \mu g(x) \leq \norm{\nabla g(x)}^2 \leq M g(x), \forall x \in \scriptX.
    \end{equation}
    If a function's gradients scale as $(\mu, M)$, we say the gradient scale is bounded.
\end{definition}

First we invoke the following lemma (Thms. 3 \& 4 in \citet{zhu19}) to show that the MSE loss remains semi-smooth and nearly convex throughout training for wide ReLU networks:
\begin{lemma}
    For sufficiently small $\norm{\theta - \theta_0}$ and $\norm{\theta - \theta'}$, the loss remains nearly convex 
    \begin{align*}
        \norm{\nabla \mathcal{L}\left(\theta\right)}^2 = \Theta\left(\mathcal{L}\left(\theta\right)\right) 
    \end{align*}
    and semi-smooth
    \begin{align*}
        \mathcal{L}\left(\theta'\right) 
            \leq 
        \mathcal{L}\left(\theta\right) 
            + \nabla \mathcal{L} \left(\theta\right) \left(\theta' - \theta\right) + \mathcal{O}\left(\scriptL\left(\theta\right)^{1/2} \norm{\theta' - \theta}\right) 
            + \mathcal{O}\left(\norm{\theta' - \theta}^2\right)
    \end{align*}
    with high probability hiding constants depending on architecture width, depth, and dataset size.
\end{lemma}
Above we use $\Theta \left(\cdot\right)$ as upper and lower bounds matching up to multiplicative constants.

Next we bound the residual terms in Lemma \ref{lem:du}:
\begin{lemma}\label{lem:main_lemma}
    If the loss function $\mathcal{L}_t$ remains smooth and near-convex as defined above,
    \begin{align*}
        \epsilon_t, \xi_t \leq \mathcal{O}(\eta^2) \scriptL_t 
    \end{align*}
    with high probability over initialization.
\end{lemma}

\begin{proof}  
    The following inequality will be used for $(\alpha, \beta)$-smooth functions.
    \begin{proposition}\label{prop:smooth}
        If $g$ is $(\alpha, \beta)$-smooth,
        \begin{equation}
            (\nabla g(y) - \nabla g(x)) (y-x) \leq \alpha (\sqrt{g(x)} + \sqrt{g(y)}) \norm{y-x} + 2\beta \norm{y-x}^2 
        \end{equation}
    \end{proposition}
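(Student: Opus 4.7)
The plan is to apply the semi-smoothness inequality twice, once as stated and once with the roles of $x$ and $y$ swapped, and then add the two resulting inequalities so that the $g(x)$ and $g(y)$ terms cancel on both sides.

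First I would write out the direct instance of $(\alpha,\beta)$-smoothness:
\begin{equation*}
g(y) \leq g(x) + \nabla g(x)^T(y-x) + \alpha \sqrt{g(x)}\,\|y-x\| + \beta \|y-x\|^2.
\end{equation*}
Then I would swap $x$ and $y$, noting that $\|x-y\|=\|y-x\|$ and $(x-y) = -(y-x)$, to obtain
\begin{equation*}
g(x) \leq g(y) - \nabla g(y)^T(y-x) + \alpha \sqrt{g(y)}\,\|y-x\| + \beta \|y-x\|^2.
\end{equation*}

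Summing these two inequalities, the $g(x)+g(y)$ terms cancel from each side, leaving
\begin{equation*}
0 \leq \bigl(\nabla g(x) - \nabla g(y)\bigr)^T(y-x) + \alpha\bigl(\sqrt{g(x)}+\sqrt{g(y)}\bigr)\|y-x\| + 2\beta\|y-x\|^2,
\end{equation*}
and rearranging yields the claim.

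There is no real obstacle here; this is the standard ``two-point trick'' used to convert a one-sided Taylor-type bound into a co-coercivity-style bound on the gradient difference. The only thing worth double-checking is the sign bookkeeping when swapping $x$ and $y$ (so that $\nabla g(y)^T(x-y) = -\nabla g(y)^T(y-x)$), and that the $\sqrt{g(\cdot)}$ terms naturally symmetrize into $\sqrt{g(x)}+\sqrt{g(y)}$ rather than needing any further manipulation.
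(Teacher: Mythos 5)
Your proof is correct and is essentially the same argument the paper gives (its one-line proof "expanding the LHS in terms of $x$ and $y$ then summing their upper bounds" is exactly this two-point symmetrization). The sign bookkeeping you flag checks out, and the $g(x)+g(y)$ terms cancel as claimed.
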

    \begin{proof}
    Expanding the LHS in terms of $x$ and $y$ then summing their upper bounds gives the inequality.
    \end{proof}

    \textbf{Bound on $\xi_t$}
    Proposition \ref{prop:smooth} with $\scriptL$ at $\theta_t$ and $\theta_t - \gamma \nabla \scriptL_t$ can be used to bound the integrand.
    \begin{align*}
        \left(\nabla \scriptL_t - \nabla \scriptL (\theta_t - \gamma \nabla \scriptL_t)\right) \nabla \scriptL_t & 
            \leq 
         \alpha \norm{\nabla \scriptL_t} \left(\sqrt{\scriptL_t} + \sqrt{\scriptL(\theta_t - \gamma \nabla \scriptL_t)}\right) + 2 \gamma \beta \norm{\nabla \scriptL_t}^2 &
    .\end{align*}
    Using the definition of smoothness 
    \begin{align*}
        \scriptL(\theta_t - \gamma \nabla \scriptL_t) & 
             \leq
        \scriptL_t 
        + \gamma \left(\alpha \sqrt{\scriptL_t} \norm{\nabla \scriptL_t} - \norm{\nabla \scriptL_t}^2\right)
        + \beta \gamma^2 \norm{\nabla \scriptL_t}^2
        & ,
    \end{align*}    
    and by near-convexity, 
    \begin{equation}\label{eq:smooth_drift}
        \leq \left(1 + \gamma (\alpha \sqrt{M}-\mu) + \beta \gamma^2\right)\scriptL_t.
    \end{equation}
    Let $b = \left(\alpha \sqrt{M} - \mu \right)/2\beta$ and $c= 1/\beta - b^2$.
    \begin{align*}
    \sqrt{\scriptL_t} + \sqrt{\scriptL(\theta_t - \gamma \nabla \scriptL_t)} 
        \leq 
    \sqrt{\scriptL_t}\left(1 + \sqrt{\beta} \left(\gamma + \abs{b} + \sqrt{\abs{c}}\right)\right) =: \sqrt{ \scriptL_t}\left(\sqrt{\beta}\gamma + c'\right)
    \end{align*}
    by the triangle inequality. 
    Again, $\norm{\nabla \scriptL_t}^2 \leq M \scriptL_t$, and we have a bound on the integrand as
    \begin{align*}
        \alpha \norm{\nabla \scriptL_t} \left(\sqrt{\scriptL_t} + \sqrt{\scriptL(\theta_t - \gamma \nabla \scriptL_t)}\right) + 2 \gamma \beta \norm{\nabla \scriptL_t}^2 
    &
        \leq \left(\alpha \sqrt{M} \left(\sqrt{\beta}\gamma + c'\right) + 2 \gamma \beta M \right) \scriptL_t 
    \\
    &
        =: \left(a' \gamma + c''\right) \scriptL_t
    \\
    \Rightarrow
        \xi_t \leq  \scriptL_t \int_{0}^{\eta} a' \gamma + c'' d\gamma 
        & 
    = O\left(\eta^2 \right) \scriptL_t.
    \end{align*}    
    where we hide constants that depend on the architecture and dataset size.

    {\textbf{Bound on $\epsilon_t$}}    
    It is sufficient that $\epsilon_t \leq \left(a \eta^2  + \lambda_{\min} \eta\right)\scriptL_t$ for any $a$ so that $\scriptL_t$ is guaranteed to decrease for small $\eta$. 
    This proof is quite involved and relies on analytic expressions for ReLU networks. 
    To this end, we follow the setting in \cite{zhu19} and WLOG fix the last layer's weights as $B$, denoting pre- and post- activations by $g^l, h^l$ respectively and an `active-indicator' matrix $D^{l} \in \mathbb{R}^{d \times d}$, $D^{l}_{k,k}= \mathbf{1}\left\{g^l_{k,k} \geq 0\right\}$, and weight matrices $W_l \in \mathbb{R}^{d\times d}$ for each layer $l \in [L]$, where $d$ denotes the width of the hidden layers and $L$ is the number of layers.

    Notice that for ReLU networks, we can write the post-activations at every layer as $h_{t+1}^l - h_t^l = D_{t+1}^l W_{t+1} h_{t+1}^{l-1} - D_t^l W_t^l h_t^{l-1}$.
    \begin{proposition}[Distributive diagonal matrices]\label{prop:linear}
        There exists $\tilde{D} = \left(\tilde{D}^{1}, \dots, \tilde{D}^{L}\right)$ with $\tilde{D}^{l} \in [-1,1]^{d\times d}$ for every $l$ such that
        \begin{align*}\label{eq:diagonal}
            D_{t+1}^l W_{t+1}^l h_{t+1}^l - D_t^l W_t^l h^{l-1}_t 
            = \left(D_t^l + \tilde{D}^{l}\right)\left(W_{t+1}^l h_{t+1}^{l-1} - W_t^l h_t^{l-1}\right).
        \end{align*}
    \end{proposition}
    The above proposition follows from case-by-case considerations of ReLU activations, see Proposition 11.3 in \citet{zhu19}.
    
    \begin{proposition}[Linear expansion of post-activations]\label{prop:zhu}
    There exists some $\tilde{D}^l \in [-1,1]^{d \times d}$ at each $l$ such that
        \begin{align*}
            h^{l}_{t+1} - h^l_t = 
                - \eta \sum_{r=1}^l \left(D^l_{t} + \tilde{D}^l\right) W^l_t \cdots W^{r+1}_t \left(D^r_t + \tilde{D}^r\right) 
            \times
                \left(\nabla_{W^r_t} \scriptL_t \right) h_{t+1}^{r-1}
        \end{align*}
    \end{proposition}
    The following proposition due to \citet{zhu19} (Lemma 8.6b and Lemma 7.1, respectively) gives bounds on the first line on the RHS and last term:
    \begin{proposition}
        For every $l \in [L]$ and $r \in [l]$,
        \begin{align*}
            \norm{\left(D^l_{t} + \tilde{D}^l\right) W^l_t \cdots W^{r+1}_t \left(D^r_t + \tilde{D}^r\right)}
            \leq  O(\sqrt{L}) 
            \norm{h_{t+1}^{r-1}} \leq o(1).
        \end{align*}
    \end{proposition}
    Applying Cauchy-Schwartz inequality and the fact that norm of sums $\leq$ sum of norms to Propositions \ref{prop:linear} and \ref{prop:zhu}, 
    \begin{align*}
        \norm{f_{\theta_{t+1}} - f_{\theta_t}} = \norm{B\left(h^L_{t+1} - h^L_t\right)},
        \leq \eta O(L^{1.5} \sqrt{d}) \norm{\nabla \scriptL_t}.
    \end{align*}
    Since $\norm{\nabla \scriptL_t} \leq \sqrt{M \scriptL_t}$, 
    \begin{equation}
        \epsilon_t = \norm{f_{\theta_{t+1}}- f_{\theta_t}}^2 \leq O(L^3 d M) \eta^2 \scriptL_t = O(\eta^2) \scriptL_t.
    \end{equation}    
\end{proof}

Theorem \ref{thm:main} is a direct consequence of Lemmas \ref{lem:du} and \ref{lem:main_lemma}, and the step-size can be selected based on $\scriptK_0$ because $\scriptK_t$ remains in a neighborhood of $\scriptK_0$ throughout training \citep{Arora19}.

\end{document}